\documentclass{article}
\usepackage{leaplab}
\usepackage{placeins}
\usepackage[table,x11names]{xcolor}
\usepackage[utf8]{inputenc}
\usepackage[T1]{fontenc}
\usepackage{url}
\usepackage{booktabs}
\usepackage{amsfonts}
\usepackage{wrapfig}
\usepackage{subcaption}
\usepackage{amsmath}
\usepackage{float} 
\usepackage{amssymb}
\usepackage{amsthm}
\usepackage{bm}
\usepackage{nicefrac}
\usepackage{wrapfig}
\usepackage{microtype}
\usepackage{graphicx}
\usepackage{caption}
\usepackage{subcaption}
\usepackage{algorithm}
\usepackage{algorithmic}
\usepackage{multirow}
\usepackage[most]{tcolorbox}
\usepackage{enumerate}
\usepackage{pifont}
\usepackage{amsthm}

\usepackage{fontawesome5}
\usepackage{enumitem}
\usepackage{marvosym}
\usepackage[colorlinks,linkcolor=red,citecolor=blue,urlcolor=blue]{hyperref}
\usepackage{graphicx} 
\usepackage{hyperref}
\usepackage{url}
\usepackage{multirow}
\usepackage{soul}
\usepackage{multicol}
\newtheorem{theorem}{Theorem}[section]
\newtheorem{proposition}[theorem]{Proposition}

\theoremstyle{remark}

\definecolor{lightgrey}{RGB}{247, 247, 247}
\newenvironment{leapabstract}{
  \begin{tcolorbox}[
    colback=lightgrey,
    colframe=white,
    boxrule=0pt,
    arc=10pt,
    left=16pt,
    right=16pt,
    top=12pt,
    bottom=12pt,
    width=\textwidth,
    enlarge left by=0mm,
    before skip=10pt,
    after skip=10pt
  ]
}{
  \end{tcolorbox}
}

\makeatletter
\def\icmldate#1{\gdef\@icmldate{#1}}
\icmldate{\today}
\makeatother

\makeatletter
\fancypagestyle{fancytitlepage}{
  \fancyhead{}
  \lhead{\includegraphics[height=1.5cm]{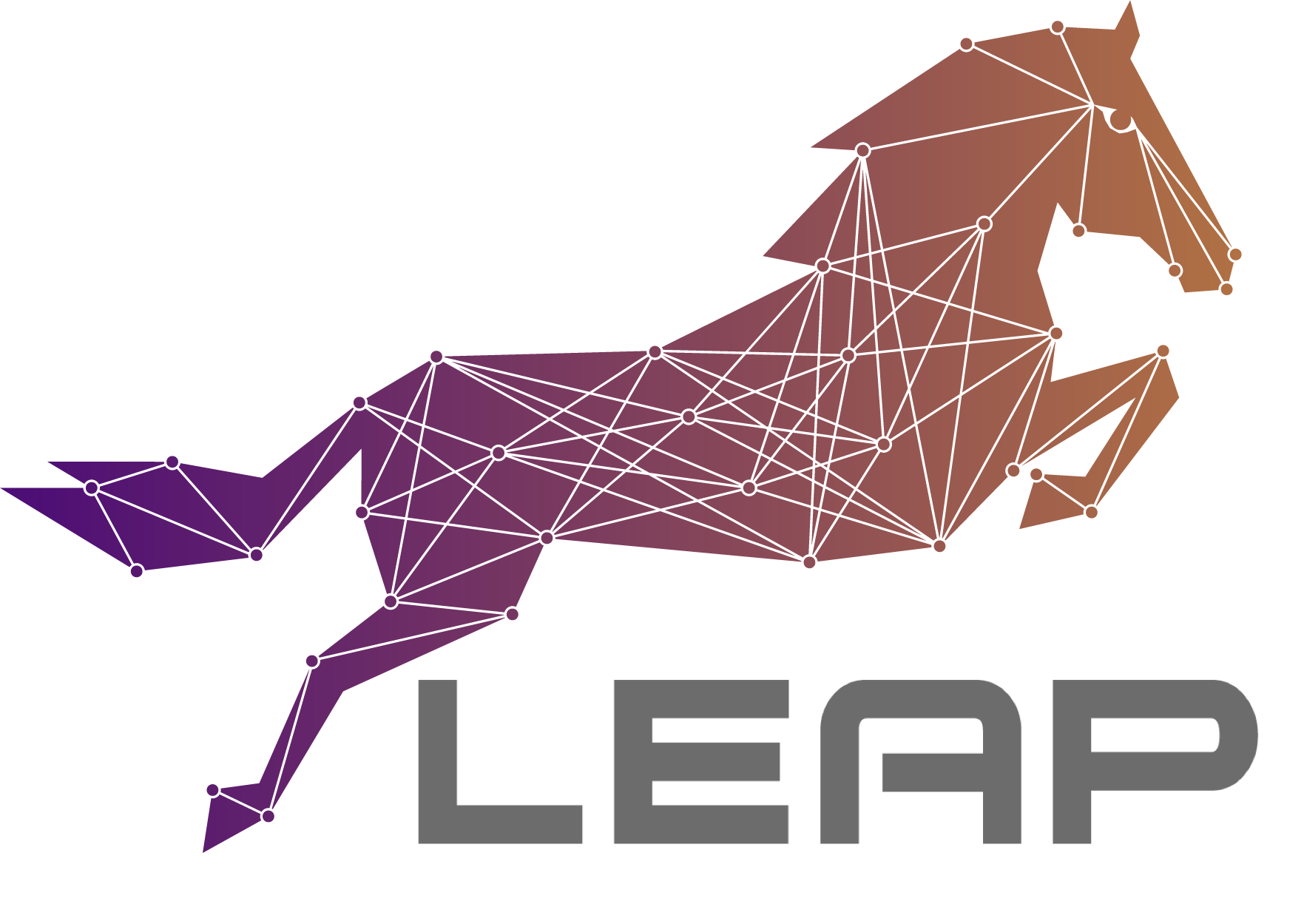}}
  \rhead{\it \@icmldate}
  \cfoot{}
}
\makeatother

\icmltitlerunning{On-policy Distillation with Verifiable Reward}

\begin{document}

\icmltitle{On-policy Distillation with Verifiable Reward}

\begin{icmlauthorlist}
\mbox{Wenze Lin$^{\,1,2,*}$},
\mbox{Jiale Zhao$^{\,1,3,*}$},
\mbox{Xitai Jiang$^{\,1,2,*}$},
\mbox{Songde Rao$^{4}$},
\mbox{Yining Li$^{\,1,2}$},
\mbox{Shenzhi Wang$^{1}$},
\mbox{Bingxiang He$^{5}$},
and \mbox{Gao Huang$^{1}$}
\end{icmlauthorlist}

$^{1\,}$LeapLab, Tsinghua University \quad
$^{2\,}$Qiuzhen College, Tsinghua University \quad
$^{3\,}$Beihang University \quad
$^{4\,}$SMS, Peking University \quad
$^{5\,}$NLPLab, Tsinghua University

$^{*}$ Equal Contribution \quad $^{\dagger}$ Project Lead \quad $^{\textrm{\Letter}}$ Corresponding Author

\icmlcorrespondingauthor{linwz25@mails.tsinghua.edu.cn, gaohuang@tsinghua.edu.cn}{}

\vskip .3in

\printNotice{}

\begin{leapabstract}
Reinforcement Learning with Verifiable Rewards (RLVR) and on-policy distillation (OPD) have become two widely adopted paradigms for post-training large language models. However, RLVR suffers from sparse task-level feedback, while OPD provides dense token-level guidance but ignores trajectory correctness, limiting its performance to that of the teacher. Combining them is a promising direction: OPD supplies dense supervisory signals, while RLVR provides task-level correctness. Nevertheless, existing integrations often rely on weighted combination or heuristic switching, introducing extra hyperparameters and trade-offs. We propose \textbf{On-policy Distillation with Verifiable Reward (OPDVR)}, a simple yet effective method that seamlessly combines OPD and RLVR without adding any hyperparameters. We first reformulate the implicit reward of sampled-token OPD based on trajectory correctness, then apply a ReLU gating mechanism to ensure that correct trajectories receive non-negative rewards and incorrect ones receive non-positive rewards—thereby aligning the distillation signal with task success while preserving the teacher's distributional guidance. Furthermore, our modification transforms sampled-token OPD into a proper RLVR method, making it readily combinable with any policy gradient algorithm, such as GRPO. Experiments on six reasoning benchmarks show that OPDVR consistently outperforms standard OPD. Our code is available at \url{https://github.com/LeapLabTHU/OPDVR}.
\end{leapabstract}

\section{Introduction}\label{sec:introduction}
Reinforcement Learning with Verifiable Rewards (RLVR) has become a prevalent and effective post-training paradigm for reasoning tasks \citep{DeepSeekR1,DeepSeekMath,jaech2024openai,trinh2024solving,yang2024qwen2}. RLVR delivers explicit reward signals based on task outcomes, such as correct mathematical final answers or compilable code, which directly align model optimization with core task objectives. However, these rewards are typically sparse, providing little supervision for intermediate steps and making credit assignment challenging.
Recently, On-policy Distillation (OPD) has emerged as a promising post-training paradigm \citep{xu2026deepseek,xiao2026mimo,yang2025qwen3,zeng2026glm}. Unlike RLVR, which relies on sparse outcome-level rewards, OPD leverages a teacher model to provide dense token-level guidance. However, OPD's objective is purely distributional---it drives the student to mimic the teacher's output distribution without considering whether the generated response is correct or incorrect, limiting the student's performance to that of the teacher.

The complementary limitations of RLVR and OPD make combining them a natural and promising direction, where OPD provides fine-grained dense guidance and RLVR offers reliable task-level correctness supervision. Existing approaches typically treat OPD and RLVR as two separate methods, either combining them explicitly or selectively applying one based on heuristic criteria. Despite their empirical effectiveness, such designs often introduce extra hyperparameters and heuristic trade-offs \citep{cai2026h,yang2026self,li2026unifying,wang2026distilled}.

In this work, we propose \textbf{On-policy Distillation with Verifiable Reward (OPDVR)}. We apply an extremely simple ReLU gating mechanism to sampled-token OPD, seamlessly combining OPD and RLVR without introducing any hyperparameters. We first revisit sampled-token OPD from an RLVR perspective and provide a mathematical reformulation of sample-token OPD's reward. From the RLVR view, if we separate by trajectory correctness, sampled-token OPD can be viewed as applying token-level supervision by weighting binary task correctness rewards with teacher-student distribution discrepancies. Specifically, for sampled tokens that constitute correct reasoning trajectories, the model is assigned a token-level reward $+1$ weighted by $\log(\pi_T / \pi_\theta)$; for tokens leading to incorrect trajectories, the model receives a reward $-1$ weighted by $\log(\pi_\theta / \pi_T)$. However, we identify a critical limitation of this inherent reward design: both $\log(\pi_T / \pi_\theta)$ and $\log(\pi_\theta / \pi_T)$ are unbounded and can be either positive or negative. However, a key empirical principle shared by mainstream RLVR algorithms ~\citep{schulman2017proximal,DeepSeekR1,DeepSeekMath} is that all tokens leading to a correct final outcome should receive non-negative advantages, while tokens leading to an incorrect outcome should receive non-positive advantages. This principle ensures that every token in a correct trajectory is treated as a valid prediction and reinforced accordingly, while every token in an incorrect trajectory is treated as an erroneous prediction and suppressed accordingly. Sampled-token OPD violates this principle: on correct trajectories, a negative value of \(\log(\pi_T / \pi_\theta)\) penalizes valid token behaviors, while on incorrect trajectories, a negative value of \(\log(\pi_\theta / \pi_T)\) encourages erroneous token predictions.

Motivated by this gap, we use a ReLU gating mechanism to standardize the reward signs according to trajectory correctness: it enforces non-negative $\log(\pi_T / \pi_\theta)$ values for all verified correct trajectories and ensures non-negative $\log(\pi_\theta / \pi_T)$ values for all incorrect trajectories. This minimal correction directly converts conventional sampled-token OPD into a valid RLVR method while preserving the guidance from the teacher model. Specifically, for correct trajectories, larger $\log(\pi_T / \pi_\theta)$ values—indicating accurate student predictions where the teacher is more confident than the student—yield stronger rewards. For incorrect trajectories, larger $\log(\pi_\theta / \pi_T)$ values—corresponding to wrong student predictions where the student is more confident than the teacher—incur heavier penalties. This learning paradigm aligns well with human learning intuition, which\textbf{ reinforces reliable correct behaviors and suppresses overconfident erroneous predictions}. Furthermore, by reformulating sampled-token OPD into an RLVR formulation, our OPDVR supports seamless integration with existing prominent RL algorithms, such as GRPO, DAPO and PPO. To instantiate this, we combine OPDVR with GRPO to obtain a new variant, which we call \textbf{Group Relative Policy Distillation (GRPD)}. Overall, our method endows the model with dual supervisory signals: the rigorous task-level verifiable correctness from RLVR and the dense token-level distributional guidance from teacher distillation, leading to more robust and effective post-training optimization for reasoning tasks. Our experiments show that both OPDVR and GRPD consistently outperform OPD across all benchmarks.

% \begin{figure}[t]  % t=top, b=bottom, h=here, p=单独一页
%     \centering
%     \includegraphics[width=1.0\textwidth]{figs/compare_final.pdf}
%     \caption{Comparison of reward formulations between OPD and OPDVR.}
%     \label{fig:pipeline}
% \end{figure}

\begin{figure}[htbp]
    \centering
    \begin{minipage}[t]{0.49\textwidth}
        \centering
        \includegraphics[width=\linewidth]{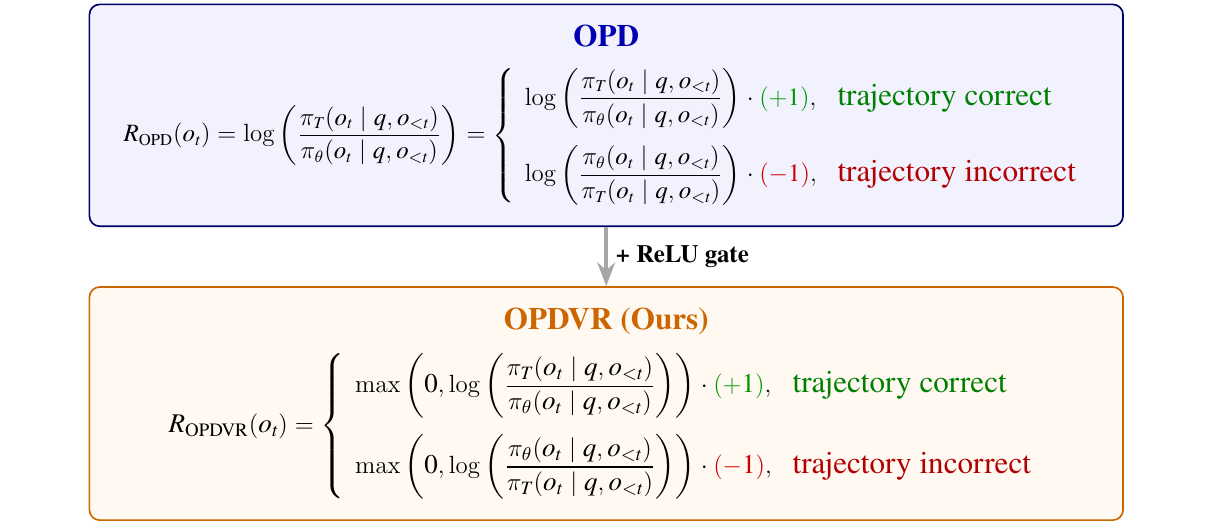}
    \end{minipage}
    \hfill
    \begin{minipage}[t]{0.49\textwidth}
        \centering
        \includegraphics[width=\linewidth]{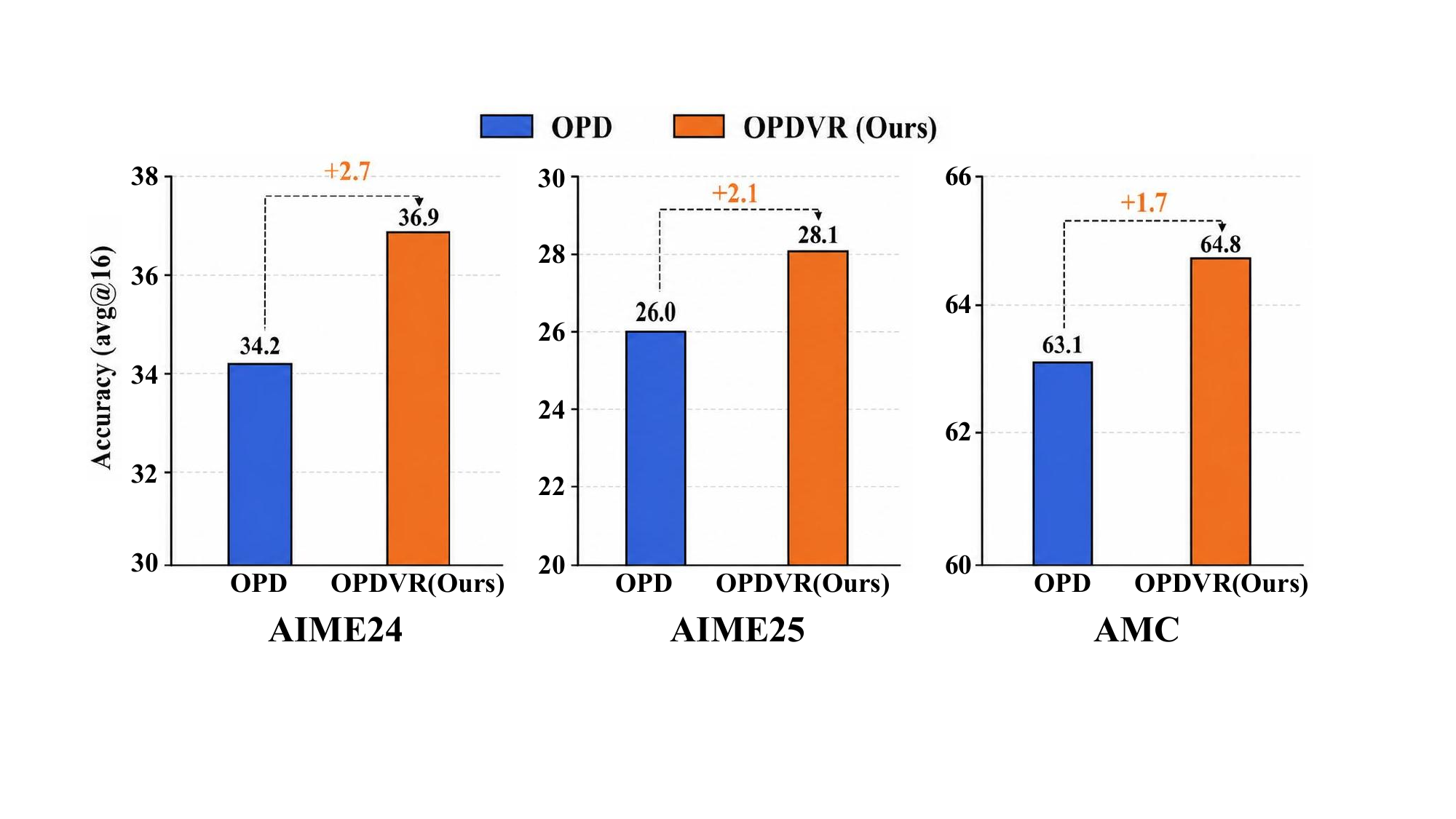}
    \end{minipage}
    \caption{Left: Overview of OPDVR. The ReLU gating mechanism ensures correct trajectories receive non-negative rewards and incorrect ones receive non-positive rewards, while preserving the teacher's distributional guidance. Right: Results on AIME24, AIME25, and AMC under the same-architecture setting (Qwen3-4B $\leftarrow$ Qwen3-4B-RL), reported as avg@16 accuracy.}
    \label{fig:combined}
\end{figure}

\section{Related Work} \label{sec:Related Work}

\paragraph{Reinforcement Learning with Verifiable Rewards (RLVR)} Reinforcement Learning with Verifiable Rewards (RLVR) has emerged as a highly effective paradigm for post-training large language models~\citep{schulman2017proximal,jaech2024openai,trinh2024solving,yang2024qwen2,DeepSeekR1,DeepSeekMath}. RLVR leverages rule-based verifiers to provide objective, deterministic signals such as answer correctness or code compilation. Various extensions have been proposed to improve RLVR, such as controlling length~\citep{LASER,sui2025stop,xiang2025just,hammoud2025train,hou2025thinkprune,huang2026hapo} and stabilizing entropy~\citep{petrenko2026entropy,EntroPIC,jiang2025rethinking,Su2025CEGPPOCE}. However, a fundamental limitation persists: the reward is sparse, leading to a severe credit assignment problem.

\paragraph{On-policy Distillation} On-policy Distillation (OPD) has recently emerged as an efficient post-training paradigm that provides dense, token-level supervisory signals by distilling a teacher model's output distribution into the student~\citep{xu2026deepseek,xiao2026mimo,yang2025qwen3,zeng2026glm}. Unlike RLVR's sparse outcome-based rewards, OPD offers fine-grained guidance at every generation step. A growing line of work has explored various aspects of OPD, including its failure modes~\citep{fu2026revisiting,li2026rethinking}, how to determine which tokens are worth training~\citep{xing2026trust,jin2026entropy}, and its training stability issues in practice~\citep{oh2026kl}. However, OPD's objective is purely distributional: it drives the student to mimic the teacher's output distribution without considering whether the generated response is correct or incorrect. This observation naturally raises the question of whether OPD can be combined with RLVR, where trajectory correctness provides the ultimate optimization signal. Combining the dense supervision of OPD with the task-level verifiability of RLVR thus represents a promising and increasingly relevant direction.

\paragraph{Combining OPD with RLVR} Given the complementary strengths of OPD (dense token-level guidance) and RLVR (task-level correctness), several recent works have attempted to combine them. These include directly weighting the two objectives~\citep{Hubotter2026ReinforcementLV}, applying OPD and RLVR separately depending on the sign of the advantage~\citep{wang2026distilled,cai2026h}, aligning outcome rewards with process-level distillation signals~\citep{hou2026uni},  and leveraging teacher-student probability ratios for to weight the advantage in GRPO~\citep{yang2026self}. Despite their empirical gains, these approaches often treat OPD and RLVR as distinct objectives or losses that must be either explicitly combined or selectively applied based on heuristic criteria, often introducing additional hyperparameters for balancing the two terms, or relies on heuristic trade-offs. Instead, our method implicitly transforms sampled-token OPD into a proper RLVR method using a simple gated mechanism, while preserving the teacher's distributional guidance—achieving the benefits of both paradigms without explicit multi-objective balancing.

\paragraph{Identifying the Misalignment between Teacher and Verifier}
Several recent works have identified the inconsistency between teacher distillation signals and verifier feedback. Uni-OPD~\citep{hou2026uni} discards prompts whose teacher sequence-level reward gap between positive and negative trajectories is smaller than a safety margin. RG-OPD~\citep{akhondzadeh2026reward} directly discards entire trajectories where the teacher signal disagrees with the verifier. However, they both operate at the trajectory level, discarding potentially valuable tokens in inconsistent trajectories. SG-OPD~\citep{xu2026sg} moves to the token level by extrapolating the distillation signal on agreeing tokens and interpolating on conflicting ones. Yet, it only applies a scalar coefficient to scale the conflicting signals, without actually eliminating their undesirable effects. In contrast, we show that a simple ReLU gate on the sampled token suffices to remove conflicting signals, directly transforming sampled-token OPD into a proper RLVR method that reinforces reliable correct predictions and suppresses overconfident errors, without introducing any extra hyperparameters or heuristics.

\section{Preliminaries} \label{sec:Preliminaries}

\subsection{Reinforcement Learning with Verifiable Rewards (RLVR)}

Reinforcement Learning with Verifiable Rewards (RLVR) optimizes a policy using reward signals that can be automatically verified against ground truth. In mathematical reasoning tasks, the reward indicates whether the final answer is correct. A common choice for REINFORCE is \( R = +1 \) for a correct trajectory and \( R = -1 \) for an incorrect one, while many other RLVR methods (including GRPO) use \( R = +1 \) for correct and \( R = 0 \) for incorrect.

Consider a policy \(\pi_\theta\) that generates a response \(o = \{o_1, \dots, o_{|o|}\}\) given a prompt \(q\). The RLVR objective is to maximize the expected reward:
\[
\mathcal{J}(\theta) = \mathbb{E}_{o \sim \pi_\theta(\cdot \mid q)} \left[ R \right],
\]
where \(R\) is the trajectory-level reward.

To optimize this objective via gradient ascent, we compute the gradient:
\[
\nabla_\theta \mathcal{J}(\theta) = \nabla_\theta \mathbb{E}_{o \sim \pi_\theta(\cdot \mid q)} \left[ R \right].
\]

Using the REINFORCE (log-derivative) trick, we rewrite this as:
\[
\nabla_\theta \mathcal{J}(\theta) = \mathbb{E}_{o \sim \pi_\theta(\cdot \mid q)} \left[ R \cdot \nabla_\theta \log \pi_\theta(o \mid q) \right],
\]
where \(\log \pi_\theta(o \mid q) = \sum_{t=1}^{|o|} \log \pi_\theta(o_t \mid q, o_{<t})\).

We sample a single response \(o \sim \pi_\theta(\cdot \mid q)\) to obtain a Monte Carlo estimate:
\[
\nabla_\theta \mathcal{J}(\theta) \approx R \cdot \sum_{t=1}^{|o|} \nabla_\theta \log \pi_\theta(o_t \mid q, o_{<t}).
\]

This gradient can be implemented by minimizing the following loss over the sequence:
\[
\mathcal{L}_{\text{RLVR}}(\theta) = - R \cdot \sum_{t=1}^{|o|} \log \pi_\theta(o_t \mid q, o_{<t}).
\]

When \(R = +1\), minimizing this loss increases the probability of the entire response (reward). When \(R = -1\), it decreases the probability (penalty). This formulation embodies the core RL principle: correct actions are rewarded, incorrect ones are penalized.

The REINFORCE formulation above suffers from high variance, as it relies on a single trajectory-level reward. A more stable alternative is Group Relative Policy Optimization (GRPO) \citep{DeepSeekR1,DeepSeekMath}. Specifically, for a group of \(G\) responses \(\{o_1, \dots, o_G\}\) sampled from the policy, GRPO normalizes each response's reward against the group mean and standard deviation:
\[
\hat{A}_{i,t} = \frac{R_i - \text{mean}(\{R_j\}_{j=1}^G)}{\text{std}(\{R_j\}_{j=1}^G)}.
\]
Note that GRPO typically uses \( R \in \{0, 1\} \) (1 for correct, 0 for incorrect) rather than \( \{-1, +1\} \), since the group-relative normalization naturally centers the advantages around zero.

This group-relative advantage reduces variance and serves as a stable, computationally efficient alternative to direct binary rewards. For simplicity, we present the on-policy version of GRPO without clipping or KL penalty; the loss is:
\[
\mathcal{L}_{\text{GRPO}}(\theta) = - \frac{1}{G} \sum_{i=1}^G \frac{1}{|o_i|} \sum_{t=1}^{|o_i|} \hat{A}_{i,t} \log \pi_\theta(o_{i,t} \mid q, o_{i,<t}).
\]

\subsection{On-policy Distillation}

On-policy Distillation (OPD) optimizes a student policy \(\pi_\theta\) toward a teacher policy \(\pi_T\) by minimizing their reverse KL divergence:
\[
\min_{\theta} \ \mathbb{E}_{o \sim \pi_\theta(\cdot \mid q)} \left[ \sum_{t=1}^{|o|} D_{\text{KL}}(\pi_\theta(\cdot \mid q, o_{<t}) \parallel \pi_T(\cdot \mid q, o_{<t})) \right].
\]

The \textbf{full-vocabulary OPD} computes this exact KL at every token position:
\[
\mathcal{L}_{\text{OPD}}^{\text{full}}(\theta) = \mathbb{E}_{o \sim \pi_\theta(\cdot \mid q)} \left[ \sum_{t=1}^{|o|} \sum_{v \in \mathcal{V}} \pi_\theta(v \mid q, o_{<t}) \log \frac{\pi_\theta(v \mid q, o_{<t})}{\pi_T(v \mid q, o_{<t})} \right].
\]
This loss is directly minimized. In practice, the log-ratio \(\log(\pi_\theta / \pi_T)\) is treated as a constant reward with stop-gradient when computing gradients, which is equivalent to the exact gradient of the reverse KL via the log-derivative trick. However, computing the reverse KL over the full vocabulary at every token position is computationally expensive, especially for large language models with very large vocabularies.

A practical compromise is \textbf{Top-\(k\) OPD}, which restricts the KL computation to the \(k\) tokens with the highest student probabilities:
\[
\mathcal{L}_{\text{OPD}}^{\text{top-}k}(\theta) = \mathbb{E}_{o \sim \pi_\theta(\cdot \mid q)} \left[ \sum_{t=1}^{|o|} D_{\text{KL}}\left(\bar{\pi}_\theta^{(S_t)} \parallel \bar{\pi}_T^{(S_t)}\right) \right],
\]
where \(S_t\) is the set of top-\(k\) tokens.

The most widely adopted variant is \textbf{sampled-token OPD}. It uses a single Monte Carlo sample \(o_t \sim \pi_\theta(\cdot \mid q, o_{<t})\) to approximate the reverse KL gradient:
\[
\mathcal{L}_{\text{OPD}}^{\text{sample}}(\theta) = \mathbb{E}_{o \sim \pi_\theta(\cdot \mid q)} \left[ \sum_{t=1}^{|o|} \log \frac{\pi_\theta(o_t \mid q, o_{<t})}{\pi_T(o_t \mid q, o_{<t})} \right].
\]
The log-ratio \(\log(\pi_\theta / \pi_T)\) is treated as a constant reward with stop-gradient.

\section{Method} \label{sec:Method}
\subsection{Revisiting Sampled-Token OPD from an RLVR Perspective}

We begin with the sampled-token OPD loss. Recall that sampled-token OPD approximates the reverse KL gradient using a single Monte Carlo sample. The loss over the entire response \(o = \{o_1, \dots, o_{|o|}\}\) is defined as:
\[
\mathcal{L}_{\text{OPD}}^{\text{sample}}(\theta) = \mathbb{E}_{o \sim \pi_\theta(\cdot \mid q)} \left[ \sum_{t=1}^{|o|} \log \frac{\pi_\theta(o_t \mid q, o_{<t})}{\pi_T(o_t \mid q, o_{<t})} \right].
\]

For a sampled response \(o \sim \pi_\theta(\cdot \mid q)\), the per-token loss is:
\[
\ell_t = \log \frac{\pi_\theta(o_t \mid q, o_{<t})}{\pi_T(o_t \mid q, o_{<t})}.
\]

The gradient of the sequence-level loss with respect to \(\theta\) is:
\[
\nabla_\theta \mathcal{L}_{\text{OPD}}^{\text{sample}}(\theta) = \sum_{t=1}^{|o|} \log \frac{\pi_\theta(o_t \mid q, o_{<t})}{\pi_T(o_t \mid q, o_{<t})} \cdot \nabla_\theta \log \pi_\theta(o_t \mid q, o_{<t}).
\]

We now compare this with the standard RLVR gradient. Recall from Section \ref{sec:Preliminaries} that the RLVR loss for a trajectory with reward \(R\) is:
\[
\mathcal{L}_{\text{RLVR}}(\theta) = - R \cdot \sum_{t=1}^{|o|} \log \pi_\theta(o_t \mid q, o_{<t}).
\]

Its gradient is:
\[
\nabla_\theta \mathcal{L}_{\text{RLVR}}(\theta) = - R \cdot \sum_{t=1}^{|o|} \nabla_\theta \log \pi_\theta(o_t \mid q, o_{<t}).
\]

We observe that the sampled-token OPD gradient shares the same form as the RLVR gradient, where the log-ratio term corresponds to the reward coefficient. By matching the coefficients of 
$\nabla_\theta \log \pi_\theta(o_t|q,o_{<t})$, we obtain:
\[
- R = \log \frac{\pi_\theta(o_t \mid q, o_{<t})}{\pi_T(o_t \mid q, o_{<t})}.
\]

this observation allows us to reinterpret the log-ratio term as an implicit token-level reward $R_{\text{OPD}}(o_t)$
\[
R_{\text{OPD}}(o_t) = - \log \frac{\pi_\theta(o_t \mid q, o_{<t})}{\pi_T(o_t \mid q, o_{<t})} = \log \frac{\pi_T(o_t \mid q, o_{<t})}{\pi_\theta(o_t \mid q, o_{<t})}.
\]

However, unlike RLVR, where the reward sign is determined by the verifier outcome, the sign of Ropd is solely determined by the teacher-student probability ratio. To analyze their alignment, we consider two cases based on trajectory correctness:
\[
R_{\text{OPD}}(o_t) = 
\begin{cases}
\log \dfrac{\pi_T(o_t \mid q, o_{<t})}{\pi_\theta(o_t \mid q, o_{<t})} \cdot (+1), & \text{trajectory correct}, \\[10pt]
\log \dfrac{\pi_\theta(o_t \mid q, o_{<t})}{\pi_T(o_t \mid q, o_{<t})} \cdot (-1), & \text{trajectory incorrect}.
\end{cases}
\]

In other words, on correct trajectories, the gradient is weighted by \(\log(\pi_T / \pi_\theta)\); on incorrect trajectories, it is weighted by \(\log(\pi_\theta / \pi_T)\) with a negative sign.

The empirical RLVR principle requires that correct trajectories receive positive rewards and incorrect trajectories receive negative rewards. However, the sign of \(\log(\pi_T / \pi_\theta)\) is determined by whether \(\pi_T > \pi_\theta\) or \(\pi_T < \pi_\theta\), which is independent of trajectory correctness. This creates two failure cases: correct trajectories can receive negative rewards when \(\pi_T < \pi_\theta\), and incorrect trajectories can receive positive rewards when \(\pi_T > \pi_\theta\). Both violate the RL principle.
\subsection{OPDVR: A Simple Gated Mechanism}
\begin{figure}[t]  % t=top, b=bottom, h=here, p=单独一页
    \centering
    \includegraphics[width=1.0\textwidth]{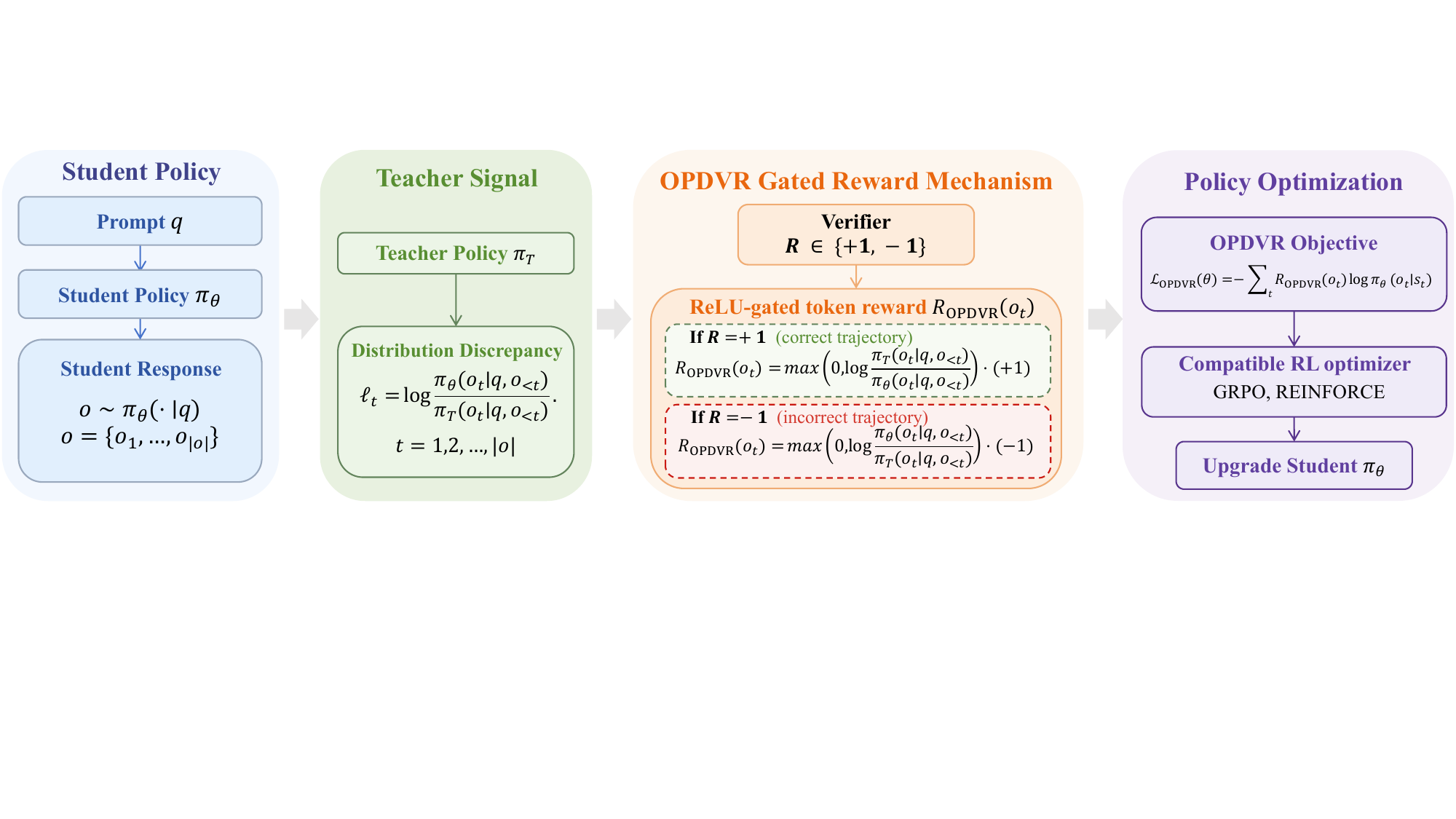}
    \caption{Training pipeline of OPDVR.}
    \label{fig:pipeline}
\end{figure}

We propose On-policy Distillation with Verifiable Reward (OPDVR) to resolve this issue. The solution is \textbf{extremely simple}: apply a ReLU gate to enforce RLVR compliance on the sampled token while preserving the teacher's distributional guidance.

The reward can be written as: 
\[
R_{\text{OPDVR}}(o_t) = 
\begin{cases}
\max\left(0, \log \dfrac{\pi_T(o_t \mid q, o_{<t})}{\pi_\theta(o_t \mid q, o_{<t})}\right) \cdot (+1), & \text{trajectory correct}, \\[12pt]
\max\left(0, \log \dfrac{\pi_\theta(o_t \mid q, o_{<t})}{\pi_T(o_t \mid q, o_{<t})}\right) \cdot (-1), & \text{trajectory incorrect}.
\end{cases}
\]

The corresponding loss is:
\[
\mathcal{L}_{\text{OPDVR}}(\theta) = - \sum_{t=1}^{|o|} R_{\text{OPDVR}}(o_t) \cdot \log \pi_\theta(o_t \mid q, o_{<t}).
\]

This gated mechanism ensures that:
\begin{itemize}
    \item Correct trajectories receive non-negative rewards, and incorrect trajectories receive non-positive rewards, aligning the reward sign with trajectory correctness.
    \item For tokens on correct trajectories, the reward magnitude is larger when the teacher is more confident than the student, i.e., $\log(\pi_T / \pi_\theta)$ is larger. This encourages the model to reinforce choices that are both correct and reliable according to the teacher.
    \item For tokens on incorrect trajectories, the penalty magnitude is larger when the student is more confident than the teacher, i.e., $\log(\pi_\theta / \pi_T)$ is larger. This forces the model to suppress overconfident mistakes.
\end{itemize}

\subsection{Interpreting the ReLU Gating Mechanism as a Conditional Mask}

In this subsection, we interpret the ReLU gating mechanism in OPDVR as a \textit{conditional token mask}: it selectively zeroes out gradient updates on tokens whose learning direction conflicts with the verifier signal. Standard sampled-token OPD lacks this mask, so it inevitably updates all tokens based solely on the teacher-student confidence ratio, regardless of trajectory correctness.

To see this, recall the standard OPD gradient for a sampled token $a$:
\begin{equation}
g_{\mathrm{OPD}} =
\underbrace{
\left[\log \frac{\pi_T}{\pi_\theta}\right]_+
\nabla_\theta \log \pi_\theta(a)
}_{\text{Term A: push up (teacher more confident)}}
-
\underbrace{
\left[\log \frac{\pi_\theta}{\pi_T}\right]_+
\nabla_\theta \log \pi_\theta(a)
}_{\text{Term B: pull down (student more confident)}}.
\end{equation}

The OPDVR gradient instead applies the ReLU gate conditioned on the verifier reward $R\in\{+1,-1\}$:
\begin{equation}
g_{\mathrm{OPDVR}} =
\begin{cases}
\text{Term A}, & R=+1,\\
-\text{Term B}, & R=-1.
\end{cases}
\end{equation}

Thus, compared to standard OPD, OPDVR \emph{removes} exactly two types of tokens whose updates conflict with the verifier:

\begin{itemize}
\item \textbf{Type I Conflicting Token (Correct Trajectory, $\pi_\theta > \pi_T$):}
When the trajectory is correct ($R=+1$), the student has already produced the right answer. If the student is more confident than the teacher on a specific token ($\pi_\theta > \pi_T$), then Term B is activated. Standard OPD applies a \textit{negative} gradient, pulling down this correct token's probability to match the teacher's lower confidence. This update works against the verifier signal in two ways:
\begin{enumerate}
    \item The student's higher confidence is \textit{consistent} with the correct outcome.
    \item Reducing the student's confidence on a correct answer introduces a distributional distortion that is not driven by task performance.
\end{enumerate}

\item \textbf{Type II Conflicting Token (Incorrect Trajectory, $\pi_T > \pi_\theta$):}
When the trajectory is incorrect ($R=-1$), the student has produced a wrong answer. If the teacher is more confident than the student on a specific token ($\pi_T > \pi_\theta$), then Term A is activated. Standard OPD applies a \textit{positive} gradient, pushing up this incorrect token's probability to align with the teacher. This update works against the verifier signal in two ways:
\begin{enumerate}
    \item The teacher's high confidence does not accompany a correct trajectory here.
    \item Increasing the probability of tokens on an incorrect trajectory reinforces a reasoning pattern that the verifier has marked as wrong.
\end{enumerate}
\end{itemize}

By masking out these two token types, OPDVR preserves the student's correct high-confidence predictions and withholds the teacher's guidance on wrong answers. The teacher still controls the \emph{magnitude} of the update via the log-ratio, but the verifier now determines the \emph{direction}---reinforce or suppress---so that the distillation process never works against the task reward. We provide a further analysis in Appendix~\ref{app:opdvr-theory}.

\subsection{Group Relative Policy Distillation (GRPD)}\label{sec:GRPD}

Having established that OPDVR transforms sampled-token OPD into a proper RLVR method with a binary verifier reward, a natural extension is to replace this coarse binary signal with a more nuanced, group-relative advantage estimate like GRPO~\citep{DeepSeekR1,DeepSeekMath}.

Recall from Section~\ref{sec:Preliminaries} that GRPO computes a group-relative advantage \(\hat{A}_{i,t}\) for each token position \(t\) in response \(o_i\):
\[
\hat{A}_{i,t} = \frac{R_i - \text{mean}(\{R_j\}_{j=1}^G)}{\text{std}(\{R_j\}_{j=1}^G)},
\]
where \(R_i \in \{0, 1\}\) is the verifier reward for response \(o_i\), and \(G\) is the group size. The key property of \(\hat{A}_{i,t}\) is that it is positive for responses that are better than the group average, and negative for those that are worse—capturing relative performance within the sampled batch.

We now apply the same ReLU gating logic, but with the binary correctness sign \(R\) replaced by the group-relative advantage \(\hat{A}_{i,t}\). The GRPD reward becomes:

\[
R_{\text{GRPD}}(o_{i,t}) = 
\begin{cases}
\max\!\left(0, \log \dfrac{\pi_T(o_{i,t} \mid q, o_{i,<t})}{\pi_\theta(o_{i,t} \mid q, o_{i,<t})}\right) \cdot (+1), & \hat{A}_{i,t} > 0, \\[10pt]
\max\!\left(0, \log \dfrac{\pi_\theta(o_{i,t} \mid q, o_{i,<t})}{\pi_T(o_{i,t} \mid q, o_{i,<t})}\right) \cdot (-1), & \hat{A}_{i,t} < 0.
\end{cases}
\]

Equivalently, this can be written compactly as:
\[
R_{\text{GRPD}}(o_{i,t}) = \text{sign}(\hat{A}_{i,t}) \cdot \text{ReLU}\!\left( \text{sign}(\hat{A}_{i,t}) \cdot \log \frac{\pi_T(o_{i,t} \mid q, o_{i,<t})}{\pi_\theta(o_{i,t} \mid q, o_{i,<t})} \right),
\]
where \(\text{sign}(\hat{A}_{i,t})\) ensures the direction of the update is governed by the advantage sign.

The corresponding loss is:
\[
\mathcal{L}_{\text{GRPD}}(\theta) = - \frac{1}{G} \sum_{i=1}^G \frac{1}{|o_i|} \sum_{t=1}^{|o_i|} R_{\text{GRPD}}(o_{i,t}) \cdot \log \pi_\theta(o_{i,t} \mid q, o_{i,<t}).
\]

% 如果论文中尚未定义定理环境，请添加以下定义
% \usepackage{amsthm}
% \newtheorem{proposition}{Proposition}[section]
% \newtheorem{theorem}{Theorem}[section]
% \newtheorem{lemma}{Lemma}[section]
% \newtheorem{corollary}{Corollary}[section]
% \newtheorem{remark}{Remark}[section]

\section{Experiments}

\subsection{Main Experimental Settings}

We conduct experiments on both same-architecture and cross-architecture distillation settings to evaluate the effectiveness of our method.

\textbf{Same-architecture setting.} We use Qwen3-4B-nonthinking as the student model and distill it from a teacher model of the same architecture, which is obtained by training Qwen3-4B with GRPO on the DeepMath dataset~\citep{he2026deepmath}. Following \citet{yang2026learning}, we use the filtered subset of the DeepMath dataset consisting of 57k samples with difficulty level \(\ge 6\).

\textbf{Cross-architecture setting.} We use the DAPO-Math-17k dataset~\citep{yu2026dapo}. The teacher model is Qwen3-4B-base, fine-tuned with GRPO for 3 epochs on the same dataset. The student model is Qwen3-1.7B-base. The distillation training runs for 3 epochs.

\textbf{Baselines.} We compare against Sampled-Token OPD, Top-64 OPD, OPD+GRPO~\citep{xiao2026mimo}, RLSD~\citep{yang2026self}, ExOPD~\citep{yang2026learning}, and SG-OPD~\citep{xu2026sg}. Implementation details of all baselines are provided in Appendix~\ref{app:baselines}.

\textbf{Benchmarks.} We evaluate all models on six reasoning benchmarks: AIME24, AIME25, AMC, MATH500, Minerva, and OlympiadBench.

\subsection{Main Results}

\begin{table}[t]
\centering
\caption{Results on same-architecture distillation (Qwen3-4B $\leftarrow$ Qwen3-4B-RL). All models are evaluated with avg@16 accuracy.}
\label{tab:same_arch}
\resizebox{\textwidth}{!}{%
\begin{tabular}{l|cccccc|c}
\toprule
\textbf{Method} & \textbf{AIME24} & \textbf{AIME25} & \textbf{AMC} & \textbf{MATH500} & \textbf{Minerva} & \textbf{OlympiadBench} & \textbf{Avg.} \\
\midrule
Student (Qwen3-4B) & 24.0 & 15.8 & 60.8 & 80.9 & 27.6 & 42.9 & 42.0 \\
Teacher (Qwen3-4B-RL) & 36.0 & 29.0 & 65.9 & 87.0 & 35.4 & 49.3 & 50.4 \\
\midrule
Sampled-Token OPD & 34.2 & 26.0 & 63.1 & 85.5 & 31.6 & 46.5 & 47.8 \\
Top-64 OPD & 34.6 & 23.5 & 62.0 & 85.0 & 32.2 & 46.8 & 47.4 \\
OPD+GRPO & 33.1 & 24.2 & 64.4 & 85.2 & 32.9 & 46.5 & 47.7 \\
RLSD & 28.7 & 20.0 & 62.1 & 82.8 & 31.2 & 44.0 & 44.8 \\
ExOPD & 29.8 & 26.9 & 62.2 & \textbf{86.6}& \textbf{35.9} & 46.6 & 47.9 \\
SG-OPD & 31.6 & 26.2 & 61.2 & 84.7 & 35.2 & 45.8 & 47.5 \\
\textbf{OPDVR (Ours)} & \textbf{36.9} & \textbf{28.1} & \textbf{64.8} & 84.7 & 33.2 & \textbf{47.0} & \textbf{49.1} \\
\bottomrule
\end{tabular}%
}
\end{table}

\begin{table}[t]
\centering
\caption{Results on cross-architecture distillation (Qwen3-1.7B-Base $\leftarrow$ Qwen3-4B-Base-RL). All models are evaluated with avg@16 accuracy.}
\label{tab:cross_arch}
\resizebox{\textwidth}{!}{%
\begin{tabular}{l|cccccc|c}
\toprule
\textbf{Method} & \textbf{AIME24} & \textbf{AIME25} & \textbf{AMC} & \textbf{MATH500} & \textbf{Minerva} & \textbf{OlympiadBench} & \textbf{Avg.} \\
\midrule
Student (Qwen3-1.7B-Base) & 4.1 & 1.7 & 23.2 & 48.9 & 8.9 & 17.1 & 17.3 \\
Teacher (Qwen3-4B-Base-RL) & 10.6 & 13.1 & 40.3 & 74.2 & 17.2 & 30.0 & 30.9 \\
\midrule
Sampled-Token OPD & 6.5 & 2.1 & 24.8 & 59.1 & 11.5 & 21.6 & 20.9 \\
Top-64 OPD & \textbf{8.5} & \textbf{3.3} & 26.4 & 60.1 & 10.7 & 21.4 & 21.7 \\
OPD+GRPO & 8.3 & 1.7 & 23.2 & 58.0 & 10.2 & 21.8 & 20.5 \\
RLSD & 6.6 & 2.9 & 28.7 & 59.3 & 10.4 & 21.6 & 21.6 \\
ExOPD & 7.0 & \textbf{3.3} & 25.9 & 60.4& \textbf{12.3} & 21.4 & 21.6 \\
SG-OPD & 5.5 & 2.8 & 28.2 & 59.7 & 10.7 & 22.3 & 21.5 \\
\textbf{OPDVR (Ours)} & \textbf{8.5} & \textbf{3.3} & \textbf{30.3} & \textbf{60.8} & 11.6 & \textbf{22.0} & \textbf{22.8} \\
\bottomrule
\end{tabular}%
}
\end{table}

Tables~\ref{tab:same_arch} and \ref{tab:cross_arch} report the main results on six reasoning benchmarks. As shown in both tables, OPDVR consistently outperforms all baselines in terms of average accuracy across both distillation settings. In the same-architecture setting, OPDVR achieves the highest average score of 49.1, outperforming the next best method (ExOPD, 47.9) by 1.2 points, and even surpasses the teacher model on AIME24 (36.9 vs. 36.0). In the cross-architecture setting, OPDVR again achieves the best average score of 22.8, outperforming the strongest baseline (Top-64 OPD, 21.7) by 1.1 points, with notable gains on AMC (30.3 vs. 28.7 for RLSD) and OlympiadBench (22.0 vs. 21.8 for OPD+GRPO), highlighting its robustness to both architectural differences and diverse baseline designs.

\subsection{Group Relative Policy Distillation}

To further validate the effectiveness of replacing the binary verifier reward with group-relative advantages, we instantiate OPDVR with GRPO-style advantage estimation (describe in Section~\ref{sec:GRPD}) and evaluate it under the same-architecture setting. The teacher model is the same Qwen3-4B-Nonthinking model trained with GRPO on DeepMath. The student model is Qwen3-4B-nonthinking. To more cleanly evaluate the effectiveness of replacing binary rewards with group-relative advantages, we conduct this experiment on DAPO-Math-17K, distinct from the teacher's DeepMath training data. The group size is set to \(G=8\). We compare GRPD against GRPO, standard sampled-token OPD and Distilled RL~\citep{wang2026distilled}, a method that selectively applies OPD on positive advantages and GRPO on negative advantages.

Table~\ref{tab:grpd} reports the results. GRPD consistently outperforms all baselines across all six benchmarks. It achieves notable gains of 6.5 points on AIME24 and 10.9 points on AIME25 over GRPO, and surpasses Distilled RL by 1.7 points on AIME24 and 0.7 points on average, demonstrating that a unified group-relative advantage formulation yields stronger and more stable improvements than heuristic advantage-based switching between OPD and RLVR objectives.
\begin{table}[t]
\centering
\caption{Results on Group Relative Policy Distillation (Qwen3-4B $\leftarrow$ Qwen3-4B-RL). All models are evaluated with avg@16 accuracy.}
\label{tab:grpd}
\resizebox{\textwidth}{!}{%
\begin{tabular}{l|cccccc|c}
\toprule
\textbf{Method} & \textbf{AIME24} & \textbf{AIME25} & \textbf{AMC} & \textbf{MATH500} & \textbf{Minerva} & \textbf{OlympiadBench} & \textbf{Avg.} \\
\midrule
Student (Qwen3-4B)  & 24.0 & 15.8 & 60.8 & 80.9 & 27.6 & 42.9 & 42.0 \\
Teacher (Qwen3-4B-RL) & 36.0 & 29.0 & 65.9 & 87.0 & 35.4 & 49.3 & 50.4 \\
\midrule
GRPO & 28.3 & 20.8 & 62.3 & 83.9 & 28.9 & 44.6 & 44.8 \\
OPD & 32.0 & \textbf{31.7} & 65.6 & 85.4 & 28.9 & 46.6 & 48.4 \\
Distilled RL& 33.1 & 28.9 & 66.6 & \textbf{86.3} & 30.3 & 46.8 & 48.7 \\
\textbf{GRPD (Ours)} & \textbf{34.8} & \textbf{31.7} & \textbf{67.0} & 85.6 & \textbf{30.5} & \textbf{47.0} & \textbf{49.4} \\
\bottomrule
\end{tabular}%
}
\end{table}

\subsection{Ablation Study: Inverse-Gated Experiment}
\begin{figure}[t]
    \centering
    \begin{minipage}[c]{0.52\textwidth}
        \centering
        \includegraphics[width=\textwidth]{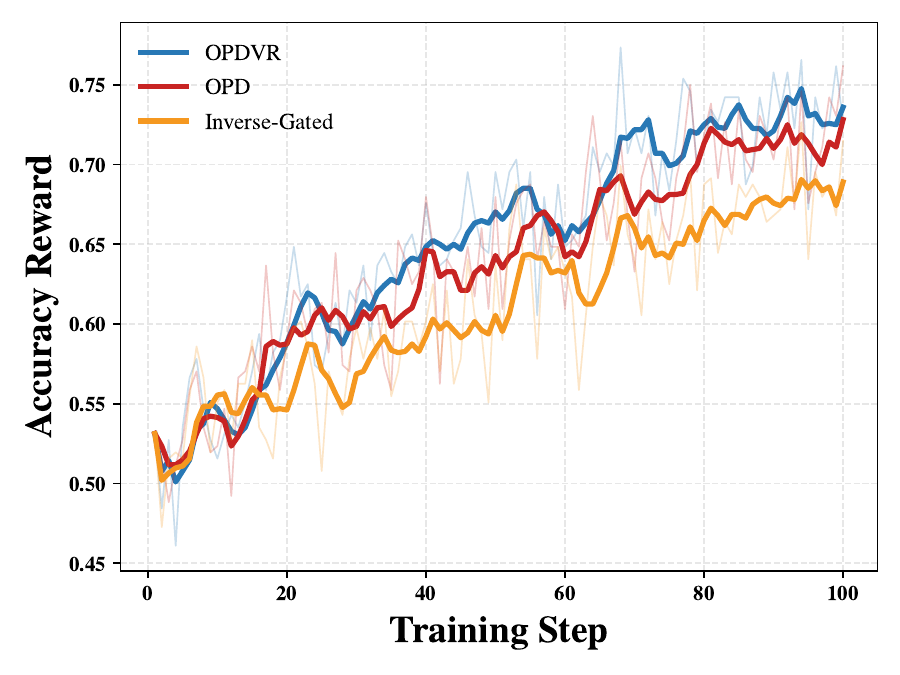}
    \end{minipage}
    \hfill
    \begin{minipage}[c]{0.47\textwidth}
        \centering
        \includegraphics[width=\textwidth]{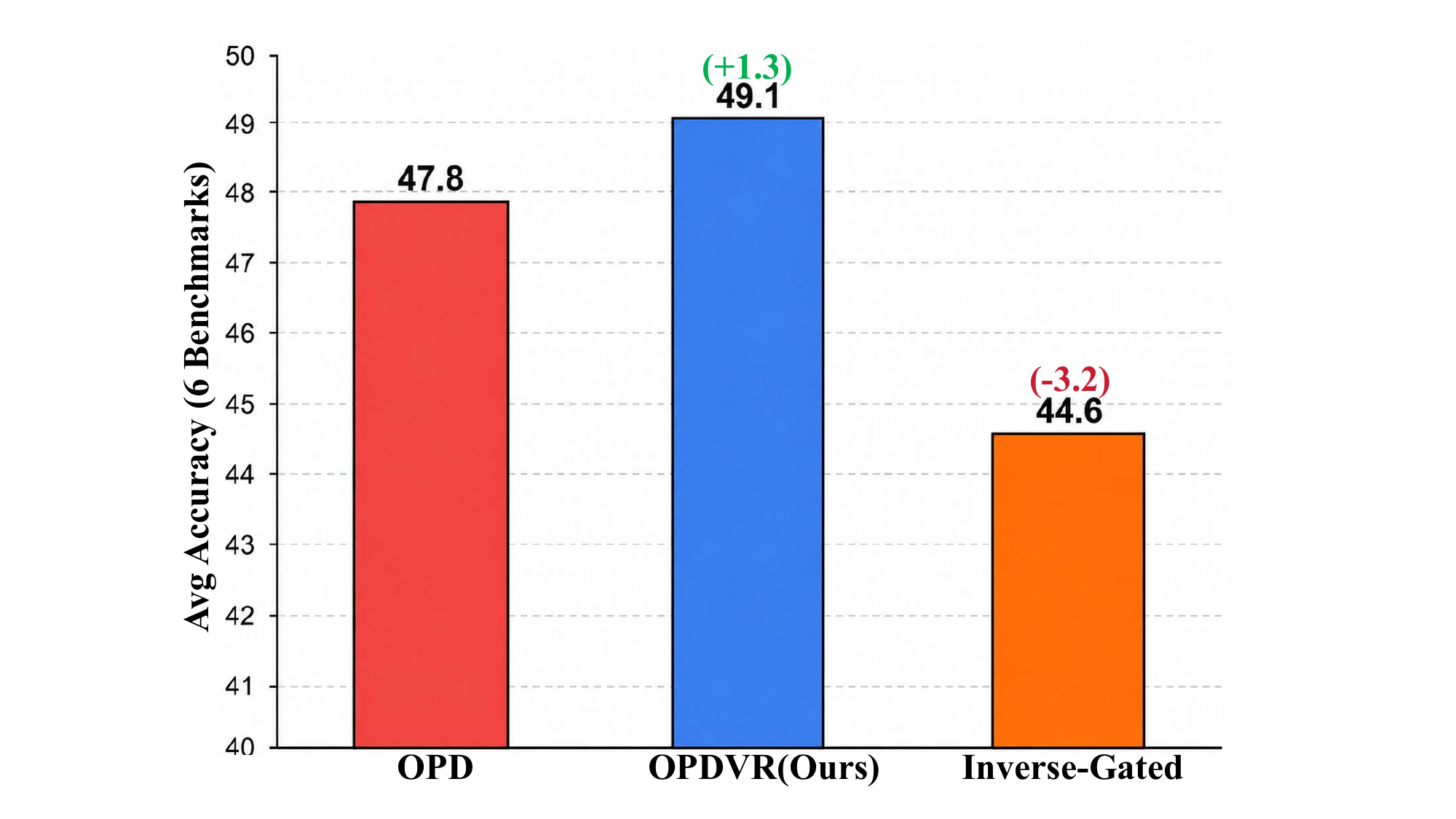}
    \end{minipage}
    \caption{Ablation study on the gating mechanism. \textbf{Left}: training-time accuracy reward of OPD, OPDVR, and the inverse-gated variant. \textbf{Right}: average accuracy over six benchmarks.}
    \label{fig:ablation_reward}
\end{figure}

\begin{table}[t]
\centering
\caption{Ablation study on the gating mechanism. We compare OPD, OPDVR, and an inverse-gated variant where the ReLU gate is applied in the opposite direction.}
\label{tab:ablation}
\resizebox{\textwidth}{!}{%
\begin{tabular}{l|cccccc|c}
\toprule
\textbf{Method} & \textbf{AIME24} & \textbf{AIME25} & \textbf{AMC} & \textbf{MATH500} & \textbf{Minerva} & \textbf{OlympiadBench} & \textbf{Avg.} \\
\midrule
Student (Qwen3-4B) & 24.0 & 15.8 & 60.8 & 80.9 & 27.6 & 42.9 & 42.0 \\
Teacher (Qwen3-4B-RL) & 36.0 & 29.0 & 65.9 & 87.0 & 35.4 & 49.3 & 50.4 \\
\midrule
OPD & 34.2 & 26.0 & 63.1 & 85.5 & 31.6 & 46.5 & 47.8 \\
OPDVR (Ours) & 36.9 & 28.1 & 64.8 & 84.7 & 33.2 & 47.0 & 49.1 \\
Inverse-Gated & 30.3 & 21.2 & 62.3 & 83.6 & 27.7 & 42.8 & 44.6 \\
\bottomrule
\end{tabular}%
}
\end{table}
% \begin{figure}[t]
%     \centering
%     \begin{minipage}[c]{0.50\textwidth}
%         \centering
%         \includegraphics[width=\textwidth]{figs/ablation_inverse_true_reward.pdf}
%     \end{minipage}
%     \hfill
%     \begin{minipage}[c]{0.47\textwidth}
%         \centering
%         \includegraphics[width=\textwidth]{figs/ablation_avg_bar.pdf}
%     \end{minipage}
%     \caption{Ablation study on the gating mechanism. \textbf{Left}: training-time accuracy reward of OPD, OPDVR, and the inverse-gated variant. \textbf{Right}: average accuracy over six benchmarks.}
%     \label{fig:ablation_reward}
% \end{figure}
To examine the effectiveness of the gating mechanism in OPDVR, we conduct an inverse-gated ablation under the same-architecture setting (Qwen3-4B $\leftarrow$ Qwen3-4B-RL) on DeepMath, following the same experimental setup as the main experiments.. Recall that OPDVR keeps a token only when its update direction agrees with the verifier signal: on a correct trajectory ($R{=}+1$), tokens where the teacher is more confident than the student ($\pi_T > \pi_\theta$) are kept and tokens where the student is already more confident ($\pi_\theta > \pi_T$) are gated out; on an incorrect trajectory ($R{=}-1$), the roles are reversed -- tokens where the student is more confident than the teacher are kept, and tokens where the teacher is more confident are gated out. The inverse-gated variant swaps these two sets exactly: it keeps the tokens that OPDVR gates out (the student-more-confident tokens on correct trajectories and the teacher-more-confident tokens on incorrect trajectories), and gates out the tokens that OPDVR keeps, while keeping the masking ratio and everything else unchanged.

Table~\ref{tab:ablation} shows the final benchmark results, and Figure~\ref{fig:ablation_reward} tracks the accuracy reward on the training set. The three variants start from the same point and separate monotonically throughout training, yielding a consistent ordering OPDVR $>$ OPD $>$ Inverse-Gated in both the training curves and the final benchmarks. Reversing the gate falls below vanilla OPD on all six benchmarks, confirming the effectiveness of the gating mechanism. Notably, even the inverse-gated variant still improves over the initial model, indicating that the teacher's distributional guidance remains useful—though its benefits are substantially hindered when the sign is misaligned with trajectory correctness
\begin{figure}[t]
    \centering
    \begin{subfigure}[b]{0.9\textwidth}
        \centering
        \includegraphics[width=\linewidth]{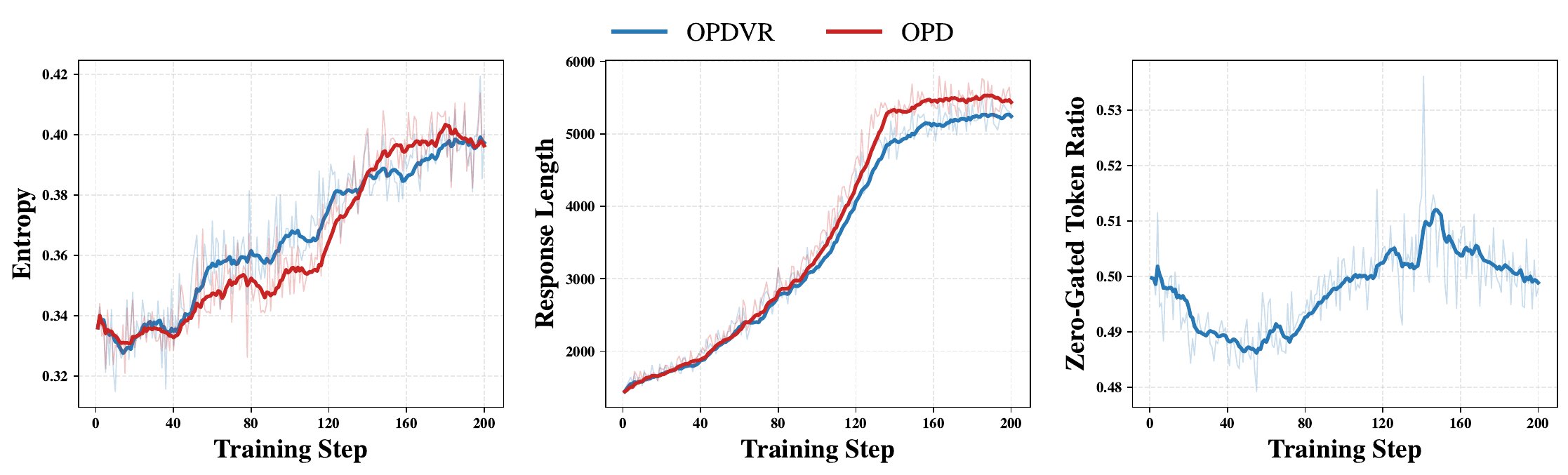}
        \caption{Training dynamic of same-architecture distillation (Qwen3-4B~$\leftarrow$~Qwen3-4B-RL).}
    \end{subfigure}    
    
    \begin{subfigure}[b]{0.9\textwidth}
        \centering
        \includegraphics[width=\linewidth]{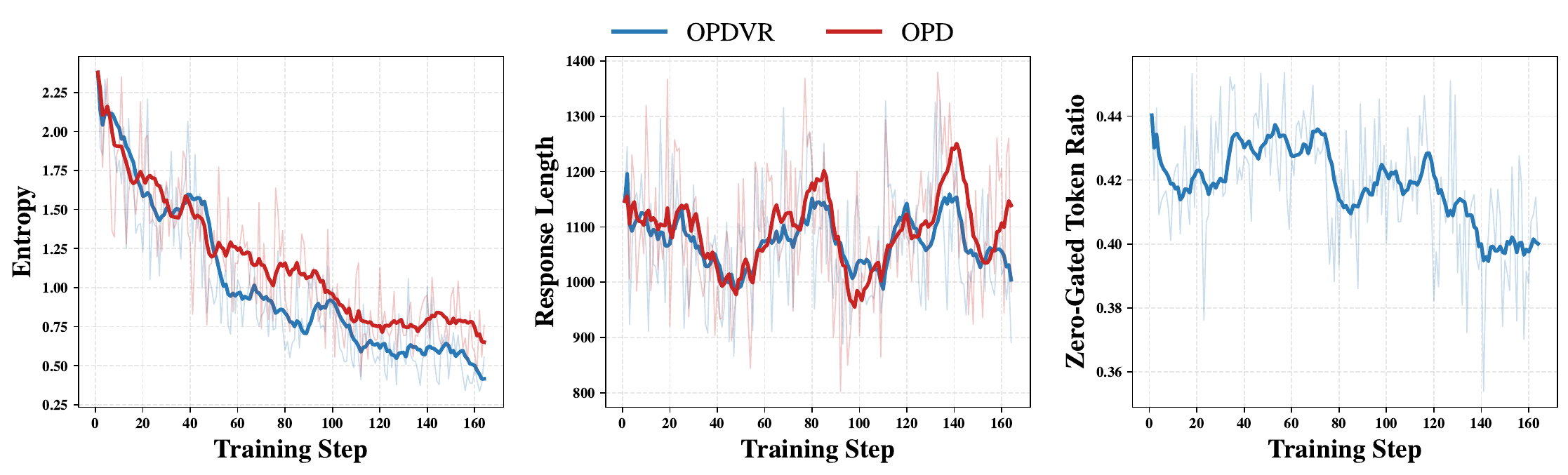}
        \caption{Training dynamic of cross-architecture distillation (Qwen3-1.7B-Base~$\leftarrow$~Qwen3-4B-Base-RL).}
    \end{subfigure}
    
    \caption{Training dynamics}
    \label{fig:train_dynamics}
\end{figure}

\subsection{Training Dynamics}

% \begin{figure}[t]
%     \centering
%     \includegraphics[width=1.0\textwidth]{figs/training_dynamic_4b.pdf}
%     \caption{Training dynamics same-architecture distillation (Qwen3-4B~$\leftarrow$~Qwen3-4B-RL).}
%     \label{fig:train_4b}
% \end{figure}
% \begin{figure}[t]
%     \centering
%     \includegraphics[width=1.0\textwidth]{figs/training_dynamic_1.7b.pdf}
%     \caption{Training dynamics of cross-architecture distillation (Qwen3-1.7B-Base~$\leftarrow$~Qwen3-4B-Base-RL).}
%     \label{fig:train_1.7b}
% \end{figure}

To understand how token-level gating shapes the distillation process, we track three quantities throughout training: the student policy entropy, the average response length, and the zero-gated token ratio, i.e., the fraction of tokens whose distillation loss is masked by the gate. Figure~\ref{fig:train_dynamics} shows the dynamics of the same-architecture setting (Qwen3-4B~$\leftarrow$~Qwen3-4B-RL) and the cross-architecture setting (Qwen3-1.7B-Base~$\leftarrow$~Qwen3-4B-Base-RL), respectively.

The entropy and response-length trajectories of the sampled-token OPD baseline show no consistent pattern across settings; instead, they depend strongly on the teacher--student pair. In the same-architecture setting, the entropy of both methods drifts mildly upward (from $\sim$0.33 to $\sim$0.40) while the response length inflates by more than four times, from roughly 1.6k to over 6.7k tokens. In the cross-architecture setting, the picture inverts: the student entropy collapses rapidly from $\sim$2.0 at the start of training and the response length stays nearly flat throughout. These quantities therefore do not follow a setting-independent trend -- their evolution is dictated by the specific teacher and student models rather than by the distillation objective itself.

In contrast, the zero-gated token ratio is consistent across both settings. It remains in a band around fifty percent during the entire course of training ($\approx$0.48--0.50 for the 4B student and $\approx$0.40--0.44 for the 1.7B student), never degenerating toward the trivial extremes of gating all or no tokens. This means that roughly half of the sampled tokens are zeroed by the ReLU gate at any point of training. Recalling the failure cases identified in Section~\ref{sec:Method}, these are exactly the tokens whose implicit OPD reward pushes against the RLVR direction: on correct trajectories where the student already assigns higher probability than the teacher ($\pi_\theta > \pi_T$), and on incorrect trajectories where the teacher still outranks the student ($\pi_T > \pi_\theta$). The stable ratio indicates that such RLVR-violating tokens constitute a persistent fraction of the data throughout training -- the gate consistently filters them out while distilling the remaining tokens with their full teacher--student log-ratio magnitude. 

\section{Conclusion}
We revisited sampled-token OPD from an RLVR perspective and observed that its implicit reward is governed by the teacher-student probability ratio rather than trajectory correctness, which can lead to updates that can penalize tokens in correct trajectories or reward tokens in incorrect ones. To address this, we proposed OPDVR, which adds a simple ReLU gate on the sampled token to enforce RLVR compliance. This minimal modification combines OPD and RLVR without any hyperparameter or heuristic trade-off. Experiments across mathematical reasoning benchmarks demonstrate that OPDVR consistently outperforms standard OPD. 

More importantly, by reformulating OPD as an RLVR method, OPDVR becomes a general framework that can be readily integrated with any policy gradient algorithm, including REINFORCE, GRPO, and DAPO. We instantiate this with GRPO and present Group Relative Policy Distillation (GRPD), which further demonstrates the versatility and strength of our approach. The empirical success of both OPDVR and GRPD across multiple benchmarks confirms that aligning distillation signals with trajectory correctness is not only effective but also broadly applicable, offering a simple yet powerful recipe for future post-training methods.

% \FloatBarrier

\clearpage
\bibliography{main}
\bibliographystyle{icml2025}

\appendix

\section{Theoretical Analysis: Why OPDVR Improves over Sampled-Token OPD}
\label{app:opdvr-theory}

We now provide a formal characterization of the advantage of OPDVR over the sampled-token OPD objective. Throughout this section, let $s_t=(q,o_{<t})$ denote the state at step $t$, $a_t=o_t$ the sampled action, and define the token-level teacher-student log-ratio as
\begin{equation}
r_t = \log \frac{\pi_T(a_t \mid s_t)}{\pi_\theta(a_t \mid s_t)}.
\end{equation}
As before, $R \in \{+1,-1\}$ denotes the trajectory-level verifier reward.

\subsection{Directional Alignment with the Verifier Gradient}

Let $u_t = \nabla_\theta \log \pi_\theta(a_t \mid s_t)$ denote the policy gradient direction of the sampled token. The token-level update direction of sampled-token OPD is
\begin{equation}
\Delta_{\mathrm{OPD},t} = r_t u_t,
\label{eq:opd-update}
\end{equation}
whereas the OPDVR update direction is
\begin{equation}
\Delta_{\mathrm{OPDVR},t} = R \, \mathrm{ReLU}(R r_t) \, u_t,
\label{eq:opdvr-update}
\end{equation}
with $\mathrm{ReLU}(x)=\max(0,x)$.

Recall that the pure verifier-driven RLVR update direction is $\Delta_{\mathrm{RLVR},t}=R u_t$. The following proposition shows that OPDVR is always aligned with this verifier direction, while OPD may be anti-aligned.

\begin{proposition}[Verifier alignment]
For any token with $u_t \neq 0$, the OPDVR update is never in the opposite direction of the verifier gradient. Specifically,
\begin{equation}
\langle \Delta_{\mathrm{OPDVR},t}, \Delta_{\mathrm{RLVR},t}\rangle \ge 0.
\end{equation}
In contrast, sampled-token OPD satisfies
\begin{equation}
\langle \Delta_{\mathrm{OPD},t}, \Delta_{\mathrm{RLVR},t}\rangle = r_t R \|u_t\|^2,
\end{equation}
which becomes negative whenever $r_t$ and $R$ have opposite signs.
\end{proposition}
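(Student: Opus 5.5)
The plan is a direct computation of both inner products from the definitions in \eqref{eq:opd-update} and \eqref{eq:opdvr-update}, using bilinearity and the fact that $R$ and $r_t$ are scalars. Since each update direction is a scalar multiple of the same vector $u_t$, every inner product reduces to the product of the two scalar coefficients times $\|u_t\|^2$.

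For OPDVR, bilinearity gives
\begin{equation*}
\langle \Delta_{\mathrm{OPDVR},t}, \Delta_{\mathrm{RLVR},t}\rangle = R \,\mathrm{ReLU}(R r_t)\, R \,\|u_t\|^2 = R^2\, \mathrm{ReLU}(R r_t)\,\|u_t\|^2 .
\end{equation*}
Because $R\in\{+1,-1\}$, we have $R^2=1$, so the expression equals $\mathrm{ReLU}(R r_t)\|u_t\|^2$. This is a product of two nonnegative quantities, which establishes $\ge 0$.

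To make the case structure explicit and match the conditional-mask interpretation of the main text, I will split on $R$. If $R=+1$, the coefficient is $\max(0,r_t)$. If $R=-1$, it is $\max(0,-r_t)$. In both cases the inner product is zero exactly when the gate is closed (Type I and Type II conflicting tokens) and strictly positive otherwise, given $u_t\neq 0$.

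For OPD, the same bilinearity gives
\begin{equation*}
\langle r_t u_t, R u_t\rangle = r_t R\|u_t\|^2 .
\end{equation*}
Since $u_t\neq 0$, we have $\|u_t\|^2>0$, so the sign of this quantity is the sign of $r_t R$. It is therefore negative exactly when $r_t$ and $R$ have opposite signs. The two cases are $R=+1$ with $\pi_T<\pi_\theta$, and $R=-1$ with $\pi_T>\pi_\theta$, which are precisely the two failure cases identified in Section~\ref{sec:Method}.

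I expect no real obstacle. The only points needing care are to state that $R^2=1$ uses $R\in\{\pm1\}$, and to note that the hypothesis $u_t\neq0$ is needed only for the strict-negativity claim about OPD, not for the nonnegativity claim about OPDVR.
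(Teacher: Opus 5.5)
Your proposal is correct and matches the paper's proof: both arguments use bilinearity to reduce each inner product to a scalar multiple of $\|u_t\|^2$, then use $R^2=1$ to get $\mathrm{ReLU}(Rr_t)\|u_t\|^2\ge 0$ for OPDVR and read off $r_tR\|u_t\|^2$ for OPD. One small nit in your optional case-split remark: the gate also closes at the tie $r_t=0$ (i.e., $\pi_T=\pi_\theta$), which is neither a Type I nor a Type II conflicting token, so the inner product vanishes exactly when $Rr_t\le 0$, not only on conflicting tokens.
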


\begin{proof}
Using \eqref{eq:opdvr-update} and $\Delta_{\mathrm{RLVR},t}=R u_t$, we have
\begin{align}
\langle \Delta_{\mathrm{OPDVR},t}, \Delta_{\mathrm{RLVR},t}\rangle
&= \langle R \, \mathrm{ReLU}(R r_t) u_t, R u_t \rangle \notag \\
&= R^2 \, \mathrm{ReLU}(R r_t) \, \|u_t\|^2 \notag \\
&= \mathrm{ReLU}(R r_t) \, \|u_t\|^2 \ge 0.
\end{align}
For sampled-token OPD, using \eqref{eq:opd-update} gives
\begin{equation}
\langle \Delta_{\mathrm{OPD},t}, \Delta_{\mathrm{RLVR},t}\rangle
= \langle r_t u_t, R u_t \rangle
= r_t R \|u_t\|^2,
\end{equation}
which is negative when $r_t R < 0$.
\end{proof}

The condition $r_t R < 0$ corresponds exactly to the two conflict cases identified earlier: (i) a correct trajectory with $\pi_\theta > \pi_T$, and (ii) an incorrect trajectory with $\pi_T > \pi_\theta$. In both cases, OPD pushes the policy in a direction that reduces the verifier reward.

\subsection{Decomposition of the OPD Gradient}

The gated mechanism can be interpreted as explicitly removing the verifier-conflicting component from the OPD gradient.

\begin{proposition}[Conflict removal]
The sampled-token OPD update can be decomposed as
\begin{equation}
\Delta_{\mathrm{OPD},t} = \Delta_{\mathrm{OPDVR},t} + \Delta_{\mathrm{conflict},t},
\end{equation}
where
\begin{equation}
\Delta_{\mathrm{conflict},t} = r_t \, \mathbf{1}(r_t R < 0) \, u_t.
\end{equation}
Moreover, the conflict term always has a non-positive projection onto the verifier gradient:
\begin{equation}
\langle \Delta_{\mathrm{conflict},t}, \Delta_{\mathrm{RLVR},t}\rangle
= -|r_t| \, \mathbf{1}(r_t R < 0) \, \|u_t\|^2 \le 0.
\end{equation}
\end{proposition}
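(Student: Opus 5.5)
The whole statement reduces to a scalar identity: every vector involved is a multiple of the same direction $u_t$. Since $\Delta_{\mathrm{OPD},t}=r_t u_t$ by \eqref{eq:opd-update} and $\Delta_{\mathrm{OPDVR},t}=R\,\mathrm{ReLU}(Rr_t)\,u_t$ by \eqref{eq:opdvr-update}, the decomposition holds as soon as the coefficients satisfy
\begin{equation*}
r_t = R\,\mathrm{ReLU}(Rr_t) + r_t\,\mathbf{1}(r_tR<0)
\end{equation*}
for every real $r_t$ and every $R\in\{+1,-1\}$. I will verify this by splitting on the sign of $Rr_t$, using only $R^2=1$.

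\emph{Case $Rr_t\ge 0$.} Here $\mathrm{ReLU}(Rr_t)=Rr_t$, so the first term equals $R^2r_t=r_t$. The indicator vanishes, so the right-hand side is $r_t$.

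\emph{Case $Rr_t<0$.} Here the ReLU term vanishes and the indicator equals $1$, so the right-hand side is again $r_t$.

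Multiplying the scalar identity by $u_t$ then gives $\Delta_{\mathrm{OPD},t}=\Delta_{\mathrm{OPDVR},t}+\Delta_{\mathrm{conflict},t}$. The same bookkeeping also shows the two pieces have disjoint supports: each token contributes to exactly one term, depending on whether $Rr_t\ge 0$. This matches the conditional-mask interpretation given in the main text.

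For the projection claim, compute
\begin{equation*}
\langle \Delta_{\mathrm{conflict},t}, \Delta_{\mathrm{RLVR},t}\rangle = \langle r_t\,\mathbf{1}(r_tR<0)\,u_t,\; R u_t\rangle = r_tR\,\mathbf{1}(r_tR<0)\,\|u_t\|^2 .
\end{equation*}
On the event $r_tR<0$, we have $r_tR=-|r_tR|=-|r_t|$, again because $|R|=1$. Off that event, both sides of the claimed equality are zero. This yields $-|r_t|\,\mathbf{1}(r_tR<0)\,\|u_t\|^2\le 0$. As a consistency check, adding this quantity to the value $\mathrm{ReLU}(Rr_t)\|u_t\|^2$ from the verifier-alignment proposition should recover $r_tR\|u_t\|^2$, which is what that proposition gives for OPD. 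Indeed it does, since $\mathrm{ReLU}(x)-|x|\,\mathbf{1}(x<0)=x$.

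There is no real obstacle here. The only point requiring care is the boundary case $r_t=0$: the strict indicator $\mathbf{1}(r_tR<0)$ must agree with the ReLU convention $\mathrm{ReLU}(0)=0$. It does, since both terms vanish when $r_t=0$.
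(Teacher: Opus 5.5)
Your proof is correct and follows the same route as the paper: a case split on the sign of $Rr_t$, using $R^2=1$ to verify the scalar coefficient identity, after which the projection onto $Ru_t$ is a one-line computation. You also write out the projection step explicitly ($r_tR=-|r_t|$ on the event $r_tR<0$), which the paper leaves as ``follows directly,'' and you add a consistency check against the verifier-alignment proposition that the paper does not include.
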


\begin{proof}
When $r_t R \ge 0$, we have $R \, \mathrm{ReLU}(R r_t) = r_t$, so $\Delta_{\mathrm{OPDVR},t} = r_t u_t$ and the conflict term vanishes. When $r_t R < 0$, we have $\mathrm{ReLU}(R r_t)=0$, so $\Delta_{\mathrm{OPDVR},t}=0$ and the conflict term equals $\Delta_{\mathrm{OPD},t}$. The projection onto $\Delta_{\mathrm{RLVR},t}$ follows directly.
\end{proof}

Thus OPDVR is not an arbitrary modification of OPD; it is precisely OPD with the harmful verifier-opposing component removed.

\subsection{A Simplified Token-Level Analysis: OPDVR Can Strictly Outperform the Teacher}

To illustrate the mechanism while retaining the token-level structure of LLM generation, we consider a simplified single-token decision problem. Let $s$ be a fixed prefix. At this prefix, the model must choose between two candidate next tokens: $a_1$ denotes a correct key token that leads to a correct final answer, and $a_2$ denotes an incorrect key token that leads to a wrong final answer. The trajectory-level verifier reward is therefore $R(a_1)=+1$ and $R(a_2)=-1$.

Let $\pi_\theta(a_1 \mid s)=q$ and $\pi_T(a_1 \mid s)=p$. Assume the teacher is suboptimal, i.e., $p < \tfrac12$, and that the initial student policy is already better than the teacher, i.e., $q_0 > p$.

Under sampled-token OPD, the expected loss reduces to the reverse KL divergence
\begin{equation}
\mathcal{L}_{\mathrm{OPD}} = D_{\mathrm{KL}}(\pi_\theta(\cdot \mid s) \,\|\, \pi_T(\cdot \mid s)),
\end{equation}
whose global minimizer is $\pi_\theta(\cdot \mid s) = \pi_T(\cdot \mid s)$. Hence the OPD optimum satisfies $q = p$, with expected verifier reward
\begin{equation}
J_{\mathrm{OPD}} = 2p - 1.
\end{equation}

Under OPDVR, the expected update for token $a_1$ is
\begin{equation}
R(a_1) \, \mathrm{ReLU}\!\left(R(a_1) \log \frac{p}{q_0}\right)
=
\mathrm{ReLU}\!\left(\log \frac{p}{q_0}\right)
= 0,
\end{equation}
because $\log(p/q_0)<0$ since $q_0>p$. Similarly, for token $a_2$ we have $R(a_2)=-1$ and
\begin{equation}
\log \frac{1-p}{1-q_0} > 0
\end{equation}
since $q_0>p$, so the OPDVR update for $a_2$ is
\begin{equation}
R(a_2) \, \mathrm{ReLU}\!\left(R(a_2) \log \frac{1-p}{1-q_0}\right)
=
-\mathrm{ReLU}\!\left(-\log \frac{1-p}{1-q_0}\right)
= 0.
\end{equation}
Thus the expected OPDVR gradient vanishes, and the policy remains at $q_0$. Consequently,
\begin{equation}
J_{\mathrm{OPDVR}} = 2q_0 - 1 > 2p - 1 = J_{\mathrm{OPD}} = J_{\mathrm{teacher}}.
\end{equation}
Therefore, OPDVR strictly outperforms both standard sampled-token OPD and the teacher policy.

\section{Implementation Details}
\subsection{Hyperparameters}
\label{app:hyperparameters}

We provide the detailed hyperparameter configurations used in our experiments in Table~\ref{tab:hyperparameters}. All models are trained using the \textbf{Verl}~\cite{verl} framework with the settings specified below.

\begin{table}[H]
    \centering
    \caption{Hyperparameter settings.}
    \label{tab:hyperparameters}
    \vspace{5pt}
    \begin{tabular}{l|c}
        \toprule
        \textbf{Hyperparameter} & \textbf{Value} \\
        \midrule
        Learning Rate & 1e-6 \\
        Train Batch Size & 256 \\
        PPO Mini-Batch Size & 256 \\
        Max Response Length & 8192 \\
        Max Prompt Length & 1024 \\
        Rollout Temperature & 1.0 \\
        Evaluation Temperature & 0.7 \\
        Evaluation Top-$p$ & 0.95 \\
        \bottomrule
    \end{tabular}
\end{table}

\subsection{Baselines Implementation Details}
\label{app:baselines}
\textbf{OPD+GRPO~\citep{xiao2026mimo}.} We follow the original OPD objective and add a GRPO-based policy gradient loss with equal weighting (1:1) between the two terms. Specifically, the total loss is $\mathcal{L} = \mathcal{L}_{\text{OPD}} + \mathcal{L}_{\text{GRPO}}$.

\textbf{RLSD~\citep{yang2026self}.} We set the group size to 8. Since we compare against on-policy distillation methods, we replace the original privileged self-model teacher with the same teacher used by OPD to ensure a fair comparison.

\textbf{ExOPD~\citep{yang2026learning}.} Following the original paper, we set the extrapolation coefficient to 1.25, which was reported as the best setting.

\textbf{SG-OPD~\citep{xu2026sg}.} We set the extrapolation and interpolation coefficients to 1.8 and 1.0, respectively, matching the best configuration reported in the original paper.

\section{Hardware Setup}

All experiments in this paper are conducted on NVIDIA GeForce RTX 5090 GPUs.

\end{document}